\documentclass{article}
\usepackage[utf8]{inputenc}
\usepackage{style}
\usepackage{physics}
\usepackage{amsthm}
\usepackage{mathtools}
\usepackage{amsfonts}
\usepackage{float}

\title{\vspace{-1cm}More Data Can Hurt for Linear Regression:\\
Sample-wise Double Descent}

\author{Preetum Nakkiran\\Harvard University}
\date{}

\newcommand{\hbeta}{\hat\beta}
\newcommand{\cD}{\mathcal{D}}
\newcommand{\cN}{\mathcal{N}}
\newcommand{\Proj}{\text{Proj}}

\begin{document}

\maketitle

\begin{abstract}
In this expository note we describe a surprising phenomenon
in overparameterized linear regression, where the dimension exceeds the number of samples: there is a regime where the test risk of the
estimator found by gradient descent \emph{increases} with additional samples.
In other words, more data actually hurts the estimator.
This behavior is implicit in a recent line of theoretical works analyzing ``double descent'' phenomena in linear models.
In this note, we isolate and understand this behavior in an extremely simple setting: linear regression with isotropic Gaussian covariates.
In particular, this occurs due to an unconventional type of \emph{bias-variance tradeoff} in the overparameterized regime: the bias decreases with more samples,
but variance \emph{increases}.
\end{abstract}

\section{Introduction}
Common statistical intuition suggests that more data should never harm the performance of an estimator.
It was recently highlighted in \cite{deep} that 
this may not hold for \emph{overparameterized} models:
there are settings in modern deep learning
where training on more data actually hurts. 
In this note, we analyze a simple setting to understand the mechanisms behind this 
behavior.

\begin{figure}[h]
    \centering
    \begin{subfigure}[t]{0.4\textwidth}
        \includegraphics[width=\textwidth]{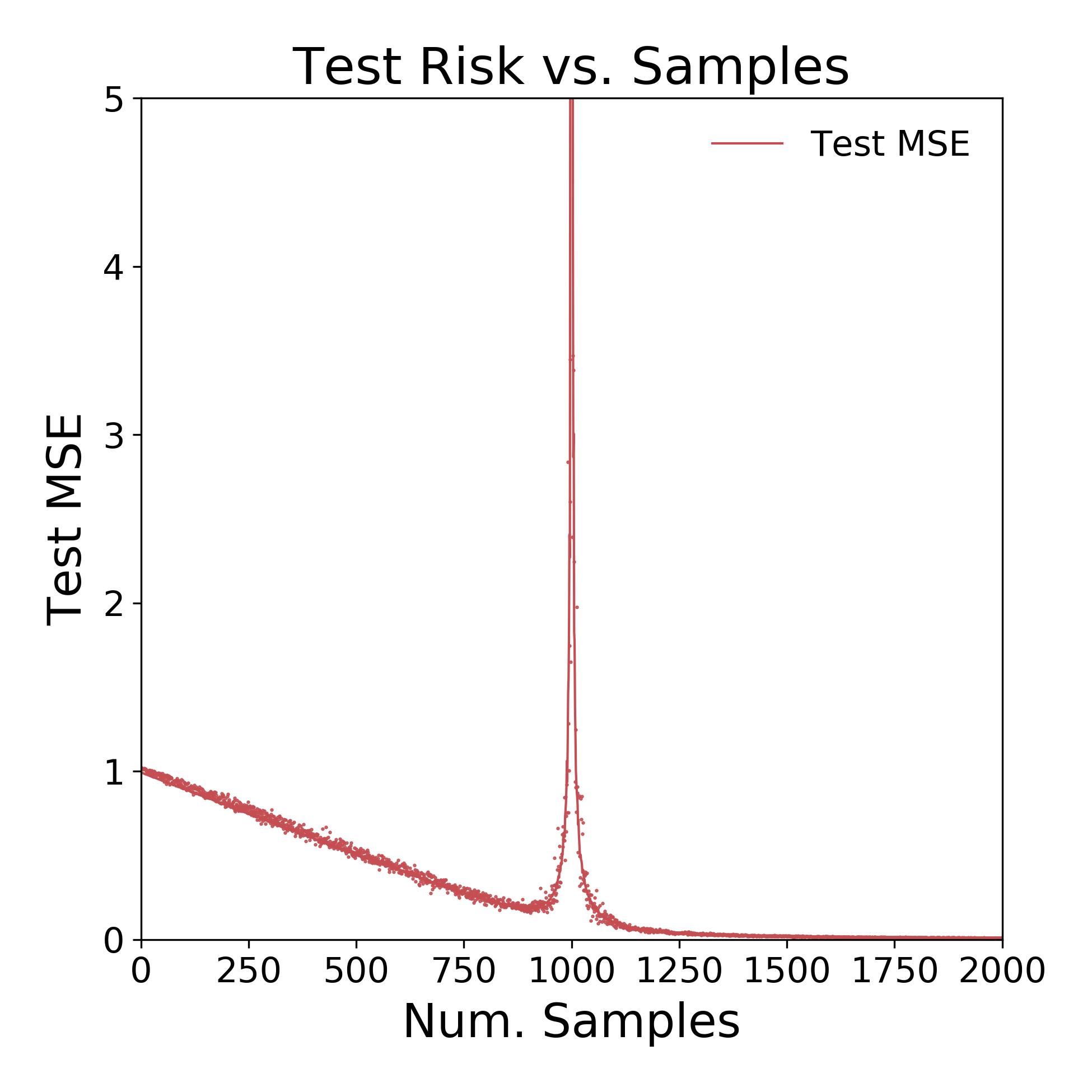}
        \caption{Test MSE for $d =1000, \sigma=0.1$.
    \label{fig:exp}
        }
    \end{subfigure}
    \begin{subfigure}[t]{0.4\textwidth}
        \includegraphics[width=\textwidth]{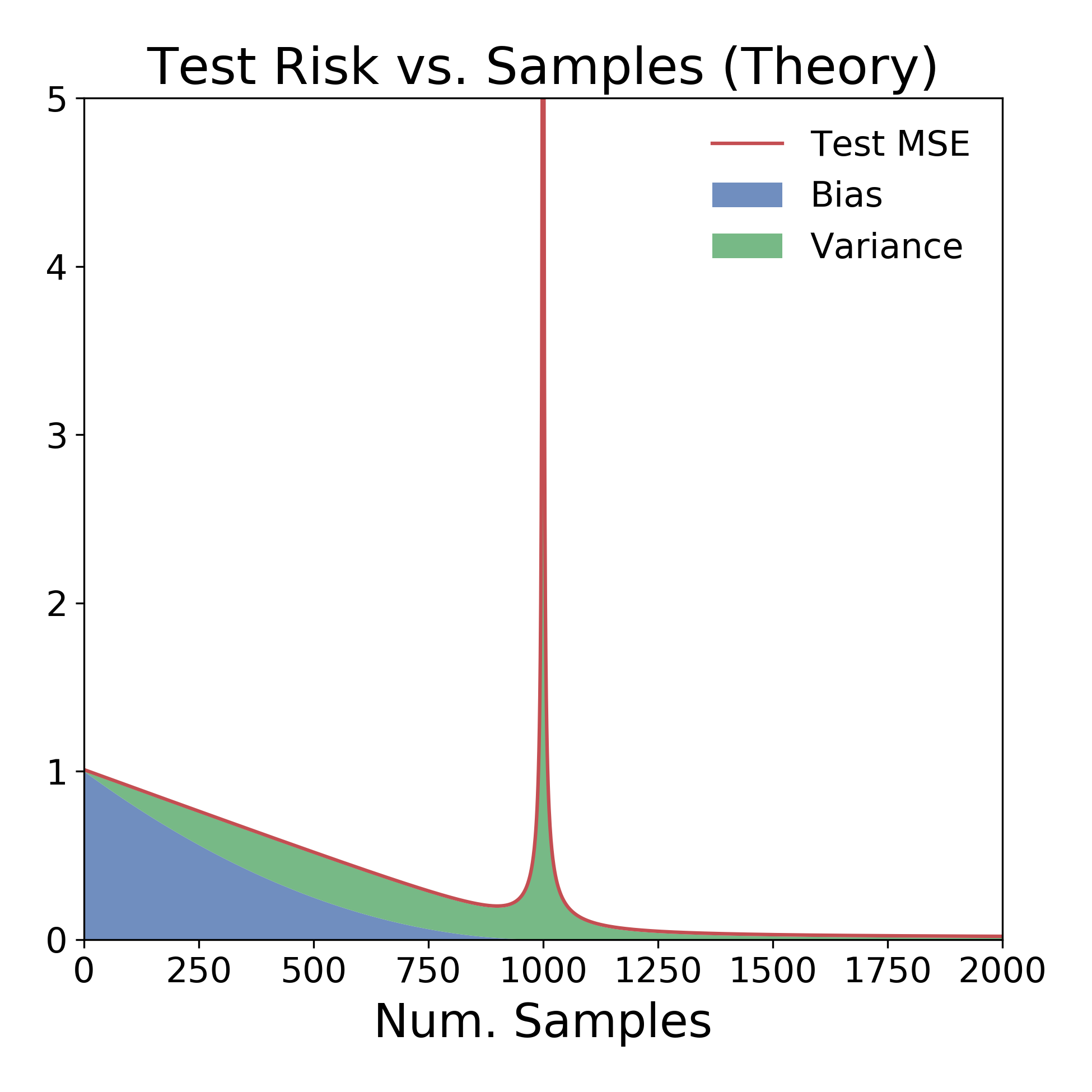}
        \caption{Test MSE in theory for $d =1000, \sigma=0.1$}
        \label{fig:theory}
    \end{subfigure}
    \caption{{\bf Test MSE vs. Num. Train Samples
    for the min-norm ridgeless regression estimator in $d=1000$ dimensions.}
    The distribution is a linear model with noise:
    covariates $x \sim \cN(0, I_d)$ and
response $y = \langle x, \beta \rangle + \cN(0, \sigma^2)$,
for $d = 1000, \sigma=0.1$, and $||\beta||_2 = 1$.
The estimator is $\hbeta = X^\dagger y$.
{\it Left:}
    Solid line shows mean over 50 trials,
    and individual points show a single trial.
{\it Right:}
Theoretical predictions for the bias, variance,
and risk from Claims~\ref{claim:main}
and \ref{claim:underparam}. }
    \label{fig:main}
\end{figure}

We focus on well-specified linear regression with Gaussian covariates, and we analyze the test risk of
the minimum-norm ridgeless regression estimator---
or equivalently, the estimator found by gradient descent on the least squares objective.
We show that as we increase the number of samples,
performance is non-monotonic: The test risk first decreases, and then \emph{increases}, before decreasing again. 

Such a ``double-descent'' behavior has been observed in the behavior of test risk as a function of the model size in a variety of machine learning settings
\cite{
opper1995statistical,
opper2001learning,
advani2017high,
belkin2018reconciling,
spigler2018jamming,
geiger2019jamming,
deep}.
Many of these works are motivated by understanding
the test risk as function of model size,
for a fixed number of samples.
In this work, we take a complementary view and understand the  
test risk as a function of
\emph{sample size}, for a fixed model.
We hope that understanding such simple settings
can eventually lead to understanding
the general phenomenon, 
and lead us to design learning algorithms which make the best use of data (and in particular, are monotonic in samples).

We note that similar analyses appear in recent works,
which we discuss in Section~\ref{sec:related}-- our focus is to highlight
the sample non-monotonicity implicit in these works, and give intuitions for the mechanisms behind it.
We specifically refer the reader to \cite{hastie2019surprises, mei2019generalization} for analysis in a setting most similar to ours.

\paragraph{Organization.}
We first define the linear regression setting in
Section~\ref{sec:linear}.
Then in Section~\ref{sec:analysis} we
state the form of the estimator found by gradient descent,
and give intuitions for why this estimator
has a peak in test risk when the number of samples is equal to the ambient dimension.
In Section~\ref{sec:biasvar}, we decompose the expected excess risk
into bias and variance contributions, and we state approximate expressions
for the bias, variance, and excess risk as a function of samples.
We show that these approximate theoretical predictions closely agree with
practice, as in Figure~\ref{fig:main}.

The peak in test risk turns out to be related to the conditioning of the data
matrix, and in Section~\ref{sec:conditioning} we give intuitions for
why this matrix is poorly conditioned in the ``critical regime'', but well
conditioned outside of it.
We also analyze the marginal effect of adding a single sample to
the test risk, in Section~\ref{sec:single}.
We conclude with discussion and open questions in Section~\ref{sec:discuss}.

\subsection{Related Works}
\label{sec:related}

This work was inspired by the long line of work studying ``double descent'' phenomena
in deep and shallow models.
The general principle is that as the model complexity increases,
the test risk of trained models first decreases and then increases (the standard U-shape), and then \emph{decreases again}.
The peak in test risk occurs in the ``critical regime'' when the models
are just barely able to fit the training set. The second descent occurs in the ``overparameterized regime'', when the model capacity is large enough to contain several interpolants on the training data. 
This phenomenon appears to be fairly universal among natural learning algorithms,
and is observed in simple settings such as linear regression, random features regression, classification with random forests, as well as modern neural networks.
Double descent of test risk with model size was introduced in generality
by \cite{belkin2018reconciling},
building on similar behavior observed as early as \cite{opper1995statistical, opper2001learning}
and more recently by
\cite{advani2017high, neal2018modern,spigler2018jamming, geiger2019jamming}.
A generalized double descent phenomenon 
was demonstrated on modern deep networks
by \cite{deep},
which also highlighted ``sample-wise nonmonotonicity''
as a consequence of double descent -- showing that more data can hurt for deep neural networks.

A number of recent works
theoretically analyze the double descent behavior
in simplified settings, often for linear models
\cite{belkin2019two, hastie2019surprises, bartlett2019benign, muthukumar2019harmless, bibas2019new, Mitra2019UnderstandingOP, mei2019generalization, liang2018just, liang2019risk,xu2019number,dereziski2019exact,lampinen2018analytic, deng2019model}.
At a high level, these works analyze the
test risk of estimators in overparameterized linear regression with different assumptions on the covariates.
We specifically refer the reader to
\cite{hastie2019surprises, mei2019generalization}
for rigorous analysis in a setting most similar to ours.
In particular, \cite{hastie2019surprises}
considers the asymptotic risk of the minimum norm
ridgeless regression estimator
in the limit where dimension $d$ and number of samples
$n$ are scaled as $d \to \infty, n = \gamma d$.
We instead focus on the sample-wise perspective: a fixed large $d$, but varying $n$.
In terms of technical content,
the analysis technique is not novel to our work,
and similar calculations appear in some of the prior works above.
Our main contribution is highlighting the sample non-monotonic behavior in a simple setting,
and elaborating on the mechanisms responsible.

While many of the above theoretical results are qualitatively similar, we highlight one interesting distinction:
our setting is \emph{well-specified}, 
and the bias of the estimator is monotone nonincreasing 
in number of samples (see Equation~\ref{eqn:bias}, and also \cite[Section 3]{hastie2019surprises}).
In contrast, for \emph{misspecified} problems (e.g. when the ground-truth is nonlinear, but we learn a linear model), the bias can actually increase with number of samples in addition to the variance increasing (see \cite{mei2019generalization}).


\section{Problem Setup}
\label{sec:linear}
Consider the following learning problem:
The ground-truth distribution
$\cD$ is $(x, y) \in \R^d \x \R$
with covariates $x \sim \cN(0, I_d)$ and
response $y = \langle x, \beta \rangle + \cN(0, \sigma^2)$
for some unknown, arbitrary
$\beta$ such that $||\beta||_2 \leq 1$.
That is, the ground-truth is an isotropic Gaussian with observation noise.
We are given $n$ samples $(x_i, y_i)$ from the distribution,
and we want to learn a linear model
$f_{\hbeta}(x) = \langle x, \hbeta \rangle$
for estimating $y$ given $x$.
That is, we want to find $\hbeta$
with small test mean squared error
\begin{align*}
R(\hbeta) &:=
\E_{(x, y) \sim \cD}[(\langle x, \hbeta \rangle - y)^2]\\
&= ||\hbeta - \beta||^2 + \sigma^2 \tag{for isotropic $x \sim \cN(0, I_d)$}
\end{align*}

Suppose we do this by performing ridgeless linear regression.
Specifically, we run gradient descent initialized at $0$ on the following objective (the empirical risk).

\begin{align}
    \min_{\hbeta} ||X\hbeta - y||^2
    \label{eqn:obj}
\end{align}
where $X \in \R^{n \x d}$ is the data-matrix of samples $x_i$,
and $y \in \R^n$ are the observations.

The solution found by gradient descent at convergence is
$\hbeta = X^\dagger y$, where $\dagger$ denotes the Moore–Penrose pseudoinverse\footnote{To see this,
notice that the iterates of gradient descent lie in the row-space of $X$.}.
Figure~\ref{fig:exp} plots the expected test MSE
of this estimator
$\E_{X, y}[R(\hbeta))]$ as we vary the number of train samples $n$.
Note that it is non-monotonic, with a peak in test MSE at $n=d$.

There are two surprising aspects of the test risk
in Figure~\ref{fig:exp}, in the overparameterized regime ($n < d$):
\begin{enumerate}
    \item {\bf The first descent:} where test risk initially decreases
even when we have less samples $n$ than dimensions $d$.
This occurs because the bias decreases.
\item {\bf The first ascent:}
where test risk increases, and peaks when $n=d$.
This is because the variance increases, and diverges when $n=d$.
\end{enumerate}
When $n > d$, this is the classical underparameterized regime, and test risk is monotone decreasing with number of samples.

Thus overparameterized linear regression exhibits a \emph{bias-variance tradeoff}:
bias decreases with more samples,
but variance can increase.
Below, we elaborate on the mechanisms and provide intuition for this non-monotonic behavior.

\section{Analysis}
\label{sec:analysis}
The solution found by gradient descent,
$\hbeta = X^\dagger y$,
has different forms depending on the ratio $n/d$.
When $n \geq d$, we are in the ``underparameterized'' regime
and there is a unique minimizer of the objective in Equation~\ref{eqn:obj}.
When $n < d$, we are ``overparameterized'' and there are many minimizers
of Equation~\ref{eqn:obj}.
In fact, since $X$ is full rank with probability 1,
there are many minimizers which \emph{interpolate}, i.e.
$X\hbeta = y$.
In this regime, gradient descent finds the minimum with
smallest $\ell_2$ norm $||\hat\beta||^2$.
That is, the solution can be written as

\begin{align*}
\hat\beta = X^\dagger y
=
\begin{dcases}
\argmin_{\substack{\beta: X\beta = y}} ||\beta||^2
    & \text{when $n \leq d$ \quad {\bf (``Overparameterized'')}}\\
\argmin_{\beta} ||X\beta - y||^2
    & \text{when $n > d$ \quad {\bf (``Underparameterized'')}}
\end{dcases}
\end{align*}

The overparameterized form yields insight into why the test MSE peaks at $n =
d$. Recall that the observations are noisy, i.e. $y = X\beta + \eta$ where $\eta
\sim N(0, \sigma^2 I_n)$.
When $n \ll d$, there are many interpolating estimators
$\{ \hbeta: X\hbeta = y \}$, and in particular there exist such $\hbeta$ with small
norm.
In contrast, when $n = d$, there is exactly one interpolating estimator
$(X\hbeta = y)$, but this estimator must have high norm in order to fit the
noise $\eta$.
More precisely, consider
\begin{align*}
    \hbeta &= X^\dagger y
    = X^\dagger(X\beta + \eta)
    = \underbrace{X^\dagger X\beta}_{\text{signal}} +
    \underbrace{X^\dagger \eta}_{\text{noise}}
\end{align*}
The signal term $X^\dagger X \beta$ is simply the orthogonal
projection of $\beta$ onto the rows of $X$.
When we are ``critically parameterized'' and $n \approx d$,
the data matrix $X$ is very poorly conditioned, and hence the noise term
$X^\dagger \eta$ has high norm, overwhelming the signal.
This argument is made precise in Section~\ref{sec:biasvar},
and in Section~\ref{sec:conditioning} we give intuition for why $X$ becomes poorly conditioned
when $n \approx d$.

The main point is that when $n = d$,
forcing the estimator $\hbeta$ to interpolate the noise
will force it to have very high norm, far from the ground-truth
$\beta$.
(See also Corollary 1 of \cite{hastie2019surprises}
for a quantification of this point).

\subsection{Excess Risk and Bias-Variance Tradeoffs}
\label{sec:biasvar}
For ground-truth parameter $\beta$,
the excess risk\footnote{For clarity, we consider the excess risk,
which omits the unavoidable additive $\sigma^2$ error in the true risk.}
of an estimator $\hbeta$ is:
\begin{align*}
\bar R(\hbeta) &:=
\E_{(x, y) \sim \cD}[(\langle x, \hbeta \rangle - y)^2]
- \E_{(x, y) \sim \cD}[(\langle x, \beta \rangle - y)^2]\\
&= \E_{x \sim \cN(0, I), \eta \sim \cN(0, \sigma^2)}[(\langle x, \hbeta \rangle - \langle x, \beta \rangle + \eta)^2]
- \sigma^2\\
&= ||\hbeta - \beta||^2  \
\end{align*}

For an estimator $\hbeta_{X, y}$ that is derived from samples $(X, y) \sim \cD^n$,
we consider the expected excess risk of $\hbeta = \hbeta_{X, y}$ in expectation over samples $(X, y)$ :
\begin{align}
\E_{X, y}[ \bar R(\hat\beta_{X, y}) ]
= \E_{X, y}[ ||\hat\beta - \beta||^2]
= \underbrace{ ||\beta - \E[\hat \beta]||^2}_{\text{Bias } B_n}
+ \underbrace{\E[|| \hbeta - \E[\hbeta] ||^2 ]}_{\text{Variance } V_n}
\end{align}
Where $B_n, V_n$ are the bias and variance of the estimator on $n$ samples.

For the specific estimator $\hbeta = X^\dagger y$ in the regime $n \leq d$,
the bias and variance can be written as (see Appendix~\ref{sec:biascomputation}):
\begin{align}
B_n &= ||\E_{X \sim \cD^n}[\Proj_{X^{\perp}}(\beta)]||^2 \label{eqn:bias}\\
V_n &=
\underbrace{\E_{X} [|| \Proj_X(\beta) - \E_X[\Proj_X(\beta)] ||^2]}_{(A)}
+
\sigma^2 \underbrace{\E_X[\Tr((XX^T)^{-1})]}_{(B)}
\label{eqn:variance}
\end{align}
where $\Proj_{X}$ is the orthogonal projector
onto the rowspace of the data $X \in \R^{n \x d}$,
and $\Proj_{X^\perp}$ is the projector onto the orthogonal complement of the rowspace.

From Equation~\ref{eqn:bias}, the bias is non-increasing with samples
($B_{n+1} \leq B_n$), since an additional sample can only grow
the rowspace: $X_{n+1}^{\perp} \subseteq X_n^{\perp}$.
The variance in Equation~\ref{eqn:variance} has two terms: the first term (A) is due to the randomness of $X$, and is bounded.
But the second term (B) is due to the randomness
in the noise of $y$, and diverges when $n \approx d$ since $X$ becomes poorly conditioned.
This trace term is responsible for the peak in test MSE at $n=d$.

We can also approximately compute the bias, variance, and excess risk.

\begin{claim}[Overparameterized Risk]
\label{claim:main}
Let $\gamma := \frac{n}{d} < 1$ be the underparameterization ratio.
The bias and variance are:
\begin{align}
B_n &= (1-\gamma)^2||\beta||^2\\
V_n &\approx \gamma(1-\gamma)||\beta||^2
+ \sigma^2 \frac{\gamma}{1-\gamma}
\end{align}
And thus the expected excess risk for $\gamma < 1$ is:
\begin{align}
\E[\bar R (\hbeta) ]
&\approx (1-\gamma)||\beta||^2
+ \sigma^2 \frac{\gamma}{1-\gamma}\\
&= (1-\frac{n}{d})||\beta||^2
+ \sigma^2 \frac{n}{d-n}
\end{align}
\end{claim}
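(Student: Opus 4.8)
The plan is to evaluate the exact expressions for $B_n$ and $V_n$ in Equations~\ref{eqn:bias} and~\ref{eqn:variance} directly, using only the rotational symmetry of the isotropic design together with one standard moment identity for Wishart matrices. The single structural input I would establish first is that, since the rows of $X$ are i.i.d.\ $\cN(0, I_d)$, the rowspace of $X$ is (almost surely) a uniformly random $n$-dimensional subspace of $\R^d$. Hence $\E_X[\Proj_X]$ commutes with every orthogonal matrix, so it is a scalar multiple of the identity; taking traces forces $\E_X[\Proj_X] = \tfrac{n}{d} I_d = \gamma I_d$, and therefore $\E_X[\Proj_{X^\perp}] = (1-\gamma) I_d$.

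Given this, the bias is immediate: $\E_X[\Proj_{X^\perp}(\beta)] = (1-\gamma)\beta$, so Equation~\ref{eqn:bias} gives $B_n = (1-\gamma)^2 ||\beta||^2$, with no approximation. For the variance I would treat the two terms of Equation~\ref{eqn:variance} in turn. For term $(A)$, write $P := \Proj_X$ and use $P^T P = P$; then $\E_X[||P\beta||^2] = \beta^T \E_X[P]\, \beta = \gamma ||\beta||^2$, while $||\E_X[P\beta]||^2 = \gamma^2 ||\beta||^2$, so $(A) = \gamma(1-\gamma)||\beta||^2$, again exactly. For term $(B)$, observe that $XX^T = \sum_{j=1}^d c_j c_j^T$ where $c_j \in \R^n$ are the i.i.d.\ $\cN(0, I_n)$ columns of $X$, so $XX^T$ is a Wishart matrix with $d$ degrees of freedom; invoking the standard identity $\E_X[(XX^T)^{-1}] = \tfrac{1}{d-n-1} I_n$ (valid for $d \geq n+2$) yields $\sigma^2 \E_X[\Tr((XX^T)^{-1})] = \sigma^2 \tfrac{n}{d-n-1} \approx \sigma^2 \tfrac{\gamma}{1-\gamma}$. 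This last step is the only place an approximation enters: the finite-$d$ correction replaces $d-n-1$ by $d-n$, negligible for the large $d$ of Figure~\ref{fig:main}. It then only remains to add: $B_n + (A) = (1-\gamma)^2||\beta||^2 + \gamma(1-\gamma)||\beta||^2 = (1-\gamma)||\beta||^2$, giving $\E_X[\bar R(\hbeta)] \approx (1-\gamma)||\beta||^2 + \sigma^2\tfrac{\gamma}{1-\gamma}$, and substituting $\gamma = n/d$ produces the final displayed line.

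The main obstacle is term $(B)$: everything else is exact and falls out of symmetry and idempotence, but $\E_X[\Tr((XX^T)^{-1})]$ requires either citing the inverse-Wishart mean or re-deriving it---for instance from the joint density of the eigenvalues of $XX^T$, or by a leave-one-column-out / Sherman--Morrison recursion applied to $XX^T = \sum_{j=1}^d c_j c_j^T$. One should also state the regime of validity ($d \geq n+2$) and make explicit that the ``$\approx$'' in the claim means equality up to the $O(1/d)$ replacement of $d-n-1$ by $d-n$.
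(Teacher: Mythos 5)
Your proposal is correct, and for the bias and term $(A)$ it coincides with what the paper does: the paper likewise observes that $\Proj_X$ is a projector onto a uniformly random $n$-dimensional subspace, so $\E_X[\Proj_X] = \gamma I_d$, giving $B_n = (1-\gamma)^2\|\beta\|^2$ and $(A) = \gamma\|\beta\|^2 - \gamma^2\|\beta\|^2 = \gamma(1-\gamma)\|\beta\|^2$ exactly. Where you genuinely diverge is term $(B)$. The paper handles $\E_X[\Tr((XX^T)^{-1})]$ asymptotically, by citing Lemma 3 of \cite{hastie2019surprises} (which rests on the Marchenko--Pastur law for the spectrum of $XX^T$ as $d\to\infty$ with $n=\gamma d$), and supplements this with the heuristic rank-one-update recursion of Section~\ref{sec:single}. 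You instead note that $XX^T = \sum_{j=1}^d c_j c_j^T$ with $c_j$ i.i.d.\ $\cN(0,I_n)$ is Wishart with $d$ degrees of freedom and invoke the exact inverse-Wishart mean $\E[(XX^T)^{-1}] = \tfrac{1}{d-n-1}I_n$ for $d\ge n+2$. This buys you something the paper does not state: an exact finite-$d$ expression $(B) = \tfrac{n}{d-n-1}$, which both recovers the asymptotic $\tfrac{\gamma}{1-\gamma}$ and makes precise where the ``$\approx$'' in the claim comes from (and why the expectation genuinely diverges at $d\in\{n,n+1\}$, consistent with the peak at $n=d$). The cost is that you must either cite or re-derive the inverse-Wishart first moment, which is standard but not trivial; the paper's route outsources the analogous work to random matrix theory. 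One small point worth making explicit if you write this up: the cross term in the variance between $(\Proj_X - \E[\Proj_X])\beta$ and $X^T(XX^T)^{-1}\eta$ vanishes because $\eta$ is mean-zero and independent of $X$ --- that is what justifies the additive decomposition of $V_n$ into $(A)$ and $(B)$ that you take as your starting point.
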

These approximations are not exact because they hold
asyptotically in the limit of large $d$
(when scaling $n = \gamma d$),
but may deviate for finite samples.
In particular, the bias $B_n$ and term (A) of the variance can be computed exactly for finite samples: $\Proj_X$ is simply a projector onto a
uniformly random $n$-dimensional subspace, so $\E[ \Proj_X(\beta)] = \gamma \beta$, and similarly $\E[||\Proj_X(\beta)||^2] = \gamma ||\beta||^2$.
The trace term (B) is nontrivial to understand for finite samples, but converges\footnote{
For large $d$,
the spectrum of $(XX^T)$ is understood by the Marchenko–Pastur law \cite{marvcenko1967distribution}.
Lemma 3 of \cite{hastie2019surprises} uses this to show
that $\Tr((XX^T)^{-1}) \to \frac{\gamma}{1-\gamma}$.
}
to $\frac{\gamma}{1-\gamma}$
in the limit of large $n, d$
(e.g. Lemma 3 of~\cite{hastie2019surprises}).
In Section~\ref{sec:single}, we give intuitions for why the trace term
converges to this.

For completeness, the bias, variance, and excess risk in the underparameterized regime
are given in \cite[Theorem 1]{hastie2019surprises} as:
\begin{claim}[Underparameterized Risk, \cite{hastie2019surprises}]
\label{claim:underparam}
Let $\gamma := \frac{n}{d} > 1$ be the underparameterization ratio.
The bias and variance are:
\begin{align*}
B_n = 0 \quad, \quad
V_n \approx \frac{\sigma^2}{\gamma - 1}
\end{align*}
\end{claim}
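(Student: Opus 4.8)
The plan is to reduce to the standard ordinary least squares calculation, since in the underparameterized regime $n > d$ the pseudoinverse estimator is just the usual OLS solution. First I would observe that for $n > d$ the rows of $X \in \R^{n \x d}$ are i.i.d. with a density, so $X$ has full column rank $d$ with probability $1$; hence $X^\dagger = (X^T X)^{-1} X^T$ and, writing $y = X\beta + \eta$ with $\eta \sim \cN(0, \sigma^2 I_n)$,
\[
\hbeta = X^\dagger y = (X^T X)^{-1} X^T (X\beta + \eta) = \beta + (X^T X)^{-1} X^T \eta .
\]
The structural point is that the ``signal'' term is exactly $\beta$ with no shrinkage (consistent with $\Proj_{X^\perp}(\beta) = 0$ in Equation~\ref{eqn:bias} once the rowspace is all of $\R^d$), and the error is purely the ``noise'' term $(X^T X)^{-1} X^T \eta$. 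Conditioning on $X$ and using that $\eta$ is independent of $X$ with mean zero gives $\E[\hbeta \mid X] = \beta$, hence $\E[\hbeta] = \beta$ and $B_n = \|\beta - \E[\hbeta]\|^2 = 0$.

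For the variance, since $\E[\hbeta] = \beta$ we have $V_n = \E\|\hbeta - \beta\|^2 = \E_X \E_\eta \|(X^T X)^{-1} X^T \eta\|^2$. Expanding the inner expectation with $\E[\eta\eta^T] = \sigma^2 I_n$ and using cyclicity of the trace yields $\E_\eta \|(X^T X)^{-1} X^T \eta\|^2 = \sigma^2 \Tr\!\big(X (X^T X)^{-2} X^T\big) = \sigma^2 \Tr\!\big((X^T X)^{-1}\big)$, so $V_n = \sigma^2 \, \E_X[\Tr((X^T X)^{-1})]$. It then remains to evaluate this trace. Here $X^T X$ is a $d \times d$ Wishart matrix with $n$ degrees of freedom and identity scale, and I would use the closed-form moment $\E[(X^T X)^{-1}] = \frac{1}{n - d - 1} I_d$ (valid for $n > d+1$), which gives $\E_X[\Tr((X^T X)^{-1})] = \frac{d}{n - d - 1}$ and hence $V_n = \frac{\sigma^2 d}{n - d - 1} \approx \frac{\sigma^2}{\gamma - 1}$ in the scaling $n = \gamma d$ with $d$ large. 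Alternatively, and in the same spirit as the overparameterized analysis, one can invoke the Marchenko--Pastur law \cite{marvcenko1967distribution} for the spectrum of $\tfrac{1}{n} X^T X$ (aspect ratio $d/n = 1/\gamma < 1$) to get $\tfrac{1}{d}\Tr\big((\tfrac{1}{n} X^T X)^{-1}\big) \to \tfrac{1}{1 - 1/\gamma}$, i.e. $\Tr((X^T X)^{-1}) \to \tfrac{1}{\gamma - 1}$, as in Lemma~3 of \cite{hastie2019surprises}.

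The main obstacle is controlling the trace term, and specifically the smallest eigenvalue $\lambda_{\min}(X^T X)$: the function $t \mapsto 1/t$ is unbounded near $0$, so convergence in distribution of the empirical spectrum does not by itself give convergence (or even finiteness) of $\E_X[\Tr((X^T X)^{-1})]$. One therefore either appeals to the exact inverse-Wishart moment above, which already encodes the integrability threshold $n > d+1$, or supplements the Marchenko--Pastur convergence with a non-asymptotic lower bound on the singular values of the Gaussian matrix $X$ together with a uniform-integrability argument to justify passing the limit through the unbounded integrand. Everything else is a routine OLS computation.
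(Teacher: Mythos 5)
Your proposal is correct. Note that the paper does not prove this claim at all---it simply cites Theorem~1 of \cite{hastie2019surprises}, whose derivation goes through asymptotic random-matrix theory (Marchenko--Pastur) in the limit $d \to \infty$, $n = \gamma d$. Your argument is therefore a genuine alternative, and in the isotropic Gaussian case it is arguably the cleaner one: the reduction $\hbeta = \beta + (X^TX)^{-1}X^T\eta$ gives $B_n = 0$ exactly and $V_n = \sigma^2\,\E_X[\Tr((X^TX)^{-1})]$ exactly, and the inverse-Wishart first moment $\E[(X^TX)^{-1}] = \tfrac{1}{n-d-1}I_d$ (for $n \ge d+2$) yields the \emph{exact} finite-sample value $V_n = \tfrac{\sigma^2 d}{n-d-1}$, from which $\tfrac{\sigma^2}{\gamma-1}$ follows as $d \to \infty$. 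What the exact route buys is a non-asymptotic statement and an automatic resolution of the integrability issue you correctly flag: weak convergence of the spectrum does not control $\E[\Tr((X^TX)^{-1})]$ near $\lambda_{\min} \approx 0$, and the Wishart moment encodes precisely the threshold $n > d+1$ below which the expectation diverges (mirroring the blow-up of the risk at $\gamma = 1$). What the Marchenko--Pastur route buys is generality---it extends to non-Gaussian and anisotropic covariates where no closed-form inverse moment exists, which is why \cite{hastie2019surprises} takes it. Your proof is a valid and self-contained justification of the claim as stated.
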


Figure~\ref{fig:main} shows that Claims~\ref{claim:main} and~\ref{claim:underparam} agree
with the excess risk experimentally
even for finite $d=1000$.

\subsection{Conditioning of the Data Matrix}
\label{sec:conditioning}

Here we give intuitions for why the data matrix $X \in \R^{n \x d}$
is well conditioned for $n \ll d$, but has small singular values for $n \approx d$.

\subsubsection{Near Criticality}
First, let us consider the effect of adding a single sample when $n=(d-1)$.
For simplicity, assume the first $(d-1)$ samples $x_i$ are just the
standard basis vectors, scaled appropriately.
That is, assume the data matrix $X \in \R^{(d-1) \x d}$ is
$$
X = \mqty[ d I_{d-1} & 0].
$$
This has all non-zero singular values equal to $d$.
Then, consider adding a new isotropic Gaussian sample $x_{n+1} \sim \cN(0, I_d)$.
Split this into coordinates as $x_{n+1} = (g_1, g_2) \in \R^{d-1} \x \R$.
The new data matrix is
$$
X_{n+1} = \mqty[ d I_{d-1} & 0 \\
            g_1 & g_2]
$$
We claim that $X_{n+1}$ has small singular values.
Indeed, consider left-multiplication by $v^T := \mqty[g_1 & -d]$:
\begin{align*}
v^T X_{n+1} =
\mqty[g_1 & -d]
\mqty[ d I_{d-1} & 0 \\
        g_1 & g_2]
= \mqty[0 & -d g_2]
\end{align*}
Thus, $||v^T X_{n+1}||^2  \approx d^2$,
while $||v||^2 \approx 2d^2$. Since $X_{n+1}$ is full-rank,
it must have a singular value less than roughly $\frac{1}{\sqrt{2}}$.
That is, adding a new sample has shrunk the minimum non-zero singular value of $X$
from $d$ to less than a constant.

The intuition here is: although the new sample $x_{n+1}$ adds rank to the existing samples, it does so in a very fragile way.
Most of the $\ell_2$ mass of $x_{n+1}$ is contained in the span of existing samples, and $x_n$ only contains a small component outside of this subspace. This causes $X_{n+1}$ to have small singular values,
which in turn causes the ridgeless regression estimator (which applies $X^\dagger$) to be sensitive to noise.

A more careful analysis shows that the singular values are actually even smaller than the above simplification suggests
--- since in the real setting, the matrix $X$ was already poorly conditioned even before the new sample $x_{n+1}$.
In Section~\ref{sec:single}
we calculate the exact effect of adding a single sample to the excess risk.

\subsubsection{Far from Criticality}
When $n \ll d$, the data matrix $X$ does not have singular values close to $0$.
One way to see this is to notice that
since our data model treats features and samples symmetrically,
$X$ is well conditioned in the regime $n \ll d$
for the same reason that standard linear regression works in the classical underparameterized regime $n \gg d$ (by ``transposing'' the setting).

More precisely, since $X$ is full rank, its smallest non-zero singular value
can be written as
$$\sigma_{\text{min}}(X) =
\min_{v \in \R^n: ||v||_2 = 1} ||v^T X||_2$$
Since $X$ has entries i.i.d $\cN(0, 1)$,
for every fixed vector $v$ we have $\E_X[ ||v^T X||^2 ] = d ||v||^2 = d$.
Moreover, for $d = \Omega(n)$ uniform convergence holds, and
$||v^T X||^2$ concentrates around its expectation for
all vectors $v$ in the $\ell_2$ ball. Thus:
$$\sigma_{\text{min}}(X)^2
\approx
\E_X\left[ \min_{v \in \R^n: ||v||_2 = 1} ||v^T X||^2 \right]
\approx
\min_v \E_X[ ||v^TX||^2 ] 
= d
$$


\subsection{Effect of Adding a Single Sample}
\label{sec:single}
Here we show how the
trace term of the variance in Equation~\ref{eqn:variance} changes with increasing samples.
Specifically, the following claim shows how $\Tr((XX^T)^{-1})$ grows
when we add a new sample to $X$.

\begin{claim}
\label{claim:trace}
Let $X \in \R^{n \x d}$ be the data matrix after $n$ samples,
and let $x \in \R^d$ be the $(n+1)$th sample.
The new data matrix is $X_{n+1} = \mqty[X \\ x^T]$, and
$$
\Tr((X_{n+1}X_{n+1}^T)^{-1})
=
\Tr[(XX^T)^{-1}]
+ \frac{1 + ||(X^T)^\dagger x||^2}{|| \mathrm{Proj}_{X^\perp}(x)||^2}
$$
\end{claim}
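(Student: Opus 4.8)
The plan is to write $X_{n+1}X_{n+1}^T$ as a $2\times 2$ block matrix, invert it with the Schur complement formula relative to the top-left block, and then simply read off the trace. Stacking the new row on $X$ gives
\[
X_{n+1}X_{n+1}^T = \begin{pmatrix} XX^T & Xx \\ (Xx)^T & \|x\|^2 \end{pmatrix}.
\]
Since $X$ has full row rank with probability $1$ (for $n+1 \le d$), the block $M := XX^T$ is invertible, so the block-inversion formula applies with Schur complement $s := \|x\|^2 - (Xx)^T M^{-1}(Xx)$, giving
\[
(X_{n+1}X_{n+1}^T)^{-1} = \begin{pmatrix} M^{-1} + \tfrac{1}{s}M^{-1}(Xx)(Xx)^T M^{-1} & -\tfrac{1}{s}M^{-1}(Xx) \\ -\tfrac{1}{s}(Xx)^T M^{-1} & \tfrac{1}{s} \end{pmatrix}.
\]

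Taking the trace, and using $\Tr\!\big(M^{-1}(Xx)(Xx)^T M^{-1}\big) = \|M^{-1}Xx\|^2$ since $M^{-1}$ is symmetric, I get
\[
\Tr\!\big((X_{n+1}X_{n+1}^T)^{-1}\big) = \Tr(M^{-1}) + \frac{1 + \|M^{-1}Xx\|^2}{s}.
\]
It then remains to identify the two new quantities. For the numerator, the identity $(X^T)^\dagger = (X^\dagger)^T = (XX^T)^{-1}X$ (valid for full-row-rank $X$) gives $M^{-1}Xx = (X^T)^\dagger x$, hence $\|M^{-1}Xx\|^2 = \|(X^T)^\dagger x\|^2$. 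For the denominator, $X^T(XX^T)^{-1}X = \Proj_X$ is the orthogonal projector onto the row space of $X$, so $(Xx)^T M^{-1}(Xx) = x^T\Proj_X x = \|\Proj_X(x)\|^2$, and the Pythagorean identity $\|x\|^2 = \|\Proj_X(x)\|^2 + \|\Proj_{X^\perp}(x)\|^2$ yields $s = \|\Proj_{X^\perp}(x)\|^2$. Substituting these gives the claimed formula.

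I don't expect a genuine obstacle here: the argument is a routine block computation once the structure is recognized. The only points needing care are (i) invoking full row rank of $X$ and $\Proj_{X^\perp}(x)\neq 0$ (both hold with probability $1$ in the overparameterized regime $n+1\le d$, so that $M$ and $s$ are nonzero), and (ii) not slipping a transpose in the pseudoinverse identities; as a safeguard one can instead verify $M^{-1}Xx = (X^T)^\dagger x$ directly from the fact that $(X^T)^\dagger x$ lies in the row space of $X$ and satisfies $X\big((X^T)^\dagger x\big) = \Proj_X(x)$.
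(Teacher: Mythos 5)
Your computation is correct, and it follows the same basic strategy as the paper: write $X_{n+1}X_{n+1}^T$ as a $2\times 2$ block matrix, invert by Schur complements, and identify the resulting scalar quantities with $\|(X^T)^\dagger x\|^2$ and $\|\Proj_{X^\perp}(x)\|^2$. The one substantive difference is which block you pivot on. The paper takes the Schur complement with respect to the scalar block $\|x\|^2$, so its top-left block of the inverse is $\bigl(XX^T - \frac{Xx x^T X^T}{\|x\|^2}\bigr)^{-1}$, a rank-one perturbation of $XX^T$ that must then be expanded with Sherman--Morrison before its trace matches $\Tr[(XX^T)^{-1}]$ plus a correction. You instead pivot on the $n\times n$ block $XX^T$, which makes the inverse's top-left block equal to $(XX^T)^{-1} + \frac{1}{s}(XX^T)^{-1}Xx\,x^TX^T(XX^T)^{-1}$ directly, so the trace identity falls out in one step and Sherman--Morrison is never needed. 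Both routes require the same regularity facts (full row rank of $X$ so that $XX^T$ is invertible, and $\Proj_{X^\perp}(x)\neq 0$ so that $s\neq 0$), which you correctly flag as holding almost surely when $n+1\le d$; your version is the slightly leaner of the two.
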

\begin{proof}
By computation in Appendix~\ref{app:trace}.
\end{proof}

If we heuristically assume the denominator
concentrates around its expectation,
$||\Proj_{X^\perp}(x)||^2 \approx d-n$, then we can
use Claim~\ref{claim:trace}
to estimate the expected effect of a single sample:

\begin{align}
\E_{x} \Tr((X_{n+1}X_{n+1}^T)^{-1})
&\approx
\Tr[(XX^T)^{-1}]
+ \frac{1 + \E_{x}||(XX^T)^{-1}Xx||^2}{d-n}\\
&=
\Tr[(XX^T)^{-1}]\left(1 + \frac{1}{d-n}\right)
+ \frac{1}{d-n}
\label{eqn:tr_single}
\end{align}

We can further estimate the growth by taking a continuous
limit for large $d$.
Let $F(\frac{n}{d}) := \E[ \Tr((X_{n}X_{n}^T)^{-1})]$.
Then for $\gamma := \frac{n}{d}$, Equation~\ref{eqn:tr_single} yields the differential equation
\begin{align*}
\frac{d F(\gamma)}{d\gamma}
= (1-\gamma)^{-1} F + (1-\gamma)^{-1}
\end{align*}
which is solved by $F(\gamma) = \frac{\gamma}{1-\gamma}$.
This heuristic derivation that
$\E[\Tr(XX^T)^{-1}] \to  \frac{\gamma}{1-\gamma}$
is consistent with the rigorous asymptotics given in \cite[Lemma 3]{hastie2019surprises}
and used in Claim \ref{claim:main}.


\section{Discussion}
\label{sec:discuss}
We hope that understanding such simple settings
can eventually lead to understanding the general
behavior of overparameterized models
in machine learning.
We consider it extremely unsatisfying that
the most popular technique in modern machine learning
(training an overparameterized neural network with SGD)
can be nonmonotonic in samples~\cite{deep}.
We hope that a greater understanding here
could help develop learning algorithms which make
the best use of data (and in particular, are monotonic in samples).

In general, we believe it is interesting to understand when and why learning
algorithms are monotonic -- especially when we don't explicitly enforce them to
be.

\subsection*{Acknowledgements}
We especially thank
Jacob Steinhardt and 
Aditi Raghunathan
for discussions and suggestions that motivated this work.
We thank
Jaros\l aw B\l asiok,
Jonathan Shi,
and Boaz Barak for useful discussions
throughout this work,
and we thank Gal Kaplun and Benjamin L. Edelman for feedback on an early
draft.

This work supported in part by
supported by NSF awards CCF 1565264,
CNS 1618026,
and 
CCF 1715187,
a Simons Investigator Fellowship,
and a Simons Investigator Award.

\bibliographystyle{apalike}
\bibliography{refs}

\appendix

\section{Appendix: Computations}
\subsection{Bias and Variance}
The computations in this section are standard.

\label{sec:biascomputation}
Assume the data distribution and problem setting from Section~\ref{sec:linear}.

For samples $(X, y)$, the estimator is:
\begin{align}
    \hbeta = X^\dagger y
    =
\begin{cases}
X^T(XX^T)^{-1}y & \text{when $n \leq d$}\\
(X^TX)^{-1}X^Ty & \text{when $n > d$}
\end{cases}
\end{align}

\begin{lemma}
For $n \leq d$, the bias and variance of the estimator $\hbeta = X^\dagger y$ is
\begin{align*}
B_n &= ||\E_{X \sim \cD^n}[\Proj_{X^{\perp}}(\beta)]||^2 \\
V_n &=
\underbrace{\E_{X} [|| \Proj_X(\beta) - \E_X[\Proj_X(\beta)] ||^2]}_{(A)}
+
\sigma^2 \underbrace{\E_X[\Tr((XX^T)^{-1})]}_{(B)}
\end{align*}
\end{lemma}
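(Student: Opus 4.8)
The plan is to substitute the noise model $y = X\beta + \eta$, with $\eta \sim \cN(0,\sigma^2 I_n)$ independent of $X$, and split $\hbeta$ into a ``signal'' part depending only on $X$ and a ``noise'' part that is linear in $\eta$; then use the independence of $\eta$ and $X$ to kill all cross terms. Concretely, since $n \leq d$ and $X$ has full row rank almost surely, $X^\dagger = X^T(XX^T)^{-1}$ and $X^\dagger X = \Proj_X$ is the orthogonal projector onto the rowspace of $X$. Hence
\begin{align*}
\hbeta = X^\dagger X \beta + X^\dagger \eta = \Proj_X(\beta) + X^\dagger \eta .
\end{align*}

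First I would compute $\E[\hbeta]$. Because $\eta$ is mean zero and independent of $X$, the tower rule gives $\E[X^\dagger \eta] = \E_X\!\big[X^\dagger\,\E_\eta[\eta]\big] = 0$, so $\E[\hbeta] = \E_X[\Proj_X(\beta)]$. For the bias, write $\beta = \E_X[\beta]$ and $\beta - \Proj_X(\beta) = \Proj_{X^\perp}(\beta)$, so that
\begin{align*}
B_n = \norm{\beta - \E[\hbeta]}^2 = \norm{\E_X[\beta - \Proj_X(\beta)]}^2 = \norm{\E_X[\Proj_{X^\perp}(\beta)]}^2 ,
\end{align*}
which is Equation~\ref{eqn:bias}.

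For the variance, decompose $\hbeta - \E[\hbeta] = \big(\Proj_X(\beta) - \E_X[\Proj_X(\beta)]\big) + X^\dagger \eta$ and expand the squared norm into the two squared terms plus a cross term. Conditioning on $X$, the first summand is a constant and $\E_\eta[X^\dagger \eta] = 0$, so the cross term vanishes after taking expectations. The first squared term is exactly $(A)$. For the second, condition on $X$ and use $\norm{X^\dagger \eta}^2 = \eta^T (X^\dagger)^T X^\dagger \eta$, so that $\E_\eta \norm{X^\dagger \eta}^2 = \sigma^2 \Tr\!\big((X^\dagger)^T X^\dagger\big)$. The remaining linear-algebra fact is $(X^\dagger)^T X^\dagger = (XX^T)^{-1} X \, X^T (XX^T)^{-1} = (XX^T)^{-1}$, using the full-row-rank form of $X^\dagger$ and symmetry of $(XX^T)^{-1}$. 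Taking expectation over $X$ turns this into $\sigma^2\,\E_X[\Tr((XX^T)^{-1})] = \sigma^2 (B)$, and summing the two terms yields Equation~\ref{eqn:variance}.

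There is no genuine obstacle here; the computation is routine once the signal/noise decomposition is in place. The only points requiring a little care are (i) the justification that the two cross terms vanish, which is exactly where independence of $\eta$ and $X$ (via the tower rule) is used, and (ii) the identity $(X^\dagger)^T X^\dagger = (XX^T)^{-1}$, which is what collapses the noise contribution to the trace term; this identity relies on $X$ having full row rank, i.e. on being in the regime $n \leq d$.
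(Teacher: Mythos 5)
Your proof is correct and follows essentially the same route as the paper's: the signal/noise decomposition $\hbeta = \Proj_X(\beta) + X^\dagger \eta$, vanishing of cross terms via independence of $\eta$ and $X$, and the identity $(X^\dagger)^T X^\dagger = (XX^T)^{-1}$ collapsing the noise contribution to the trace term. No gaps.
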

\begin{proof}

{\bf Bias.}
Note that
\begin{align*}
\beta - \E[\hbeta] &=
\beta - \E_{X, \eta}[ X^T(XX^T)^{-1}(X\beta + \eta) ]\\
&= \E_X[(I -  X^T(XX^T)^{-1}X)\beta]\\
&= \E_X[Proj_{X^\perp}(\beta)]\\
\end{align*}

Thus the bias is
\begin{align*}
B_n &= ||\beta - \E[\hat \beta]||^2\\
&= ||E_{X_n}[Proj_{X_n^{\perp}}(\beta)]||^2
\end{align*}

{\bf Variance.}

\begin{align*}
V_n &= \E_{\hbeta}[|| \hbeta - \E[\hbeta] ||^2 ]\\
&= \E_{X,\eta}[
||
X^T(XX^T)^{-1}(X\beta + \eta)
-\E_{X}[X^T(XX^T)^{-1}X\beta]
||^2 ]\\
&=
\E_{X, \eta} [||
(S - \bar{S})\beta
+
X^T(XX^T)^{-1}\eta
||^2 ] \tag{$S := X^T(XX^T)^{-1}X, \bar{S} := \E[S]$}\\
&=
\E_{X} [||
(S - \bar{S})\beta||^2]
+
\E_{X, \eta}[||X^T(XX^T)^{-1}\eta ||^2 ]\\
&=
\E_{X} [||
(S - \bar{S})\beta||^2]
+
\sigma^2 Tr((XX^T)^{-1})
\end{align*}

Notice that $S$ is projection onto the rowspace of $X$, i.e. $S = Proj_{X}$.
Thus,

\begin{align*}
V_n
:=
\E_{X} [|| Proj_X(\beta) - \E_X[Proj_X(\beta)] ||^2]
+
\sigma^2 Tr((XX^T)^{-1})
\end{align*}

\end{proof}

\subsection{Trace Computations}
\label{app:trace}
\begin{proof}[Proof of Claim~\ref{claim:trace}]
Let $X \in \R^{n \x d}$ be the data matrix after $n$ samples,
and let $x \in \R^d$ be the $(n+1)$th sample.
The new data matrix is $X_{n+1} = \mqty[X \\ x^T]$, and
$$
X_{n+1}X_{n+1}^T
=
\mqty[XX^T & Xx\\
x^TX^T & x^Tx]
$$

Now by Schur complements:

\begin{align*}
(X_{n+1}X_{n+1}^T)^{-1}
&=
\mqty[XX^T & Xx\\
x^TX^T & x^Tx]^{-1}\\
&=
\mqty[
(XX^T - \frac{Xxx^TX^T}{||x||^2})^{-1}
& -(XX^T - \frac{Xxx^TX^T}{||x||^2})^{-1}\frac{Xx}{||x||^2}\\
\dots
&
(||x||^2 - x^TX^T(XX^T)^{-1}Xx )^{-1}
]
\end{align*}

Thus
\begin{align*}
Tr((X_{n+1}X_{n+1}^T)^{-1})
&=
Tr((XX^T - \frac{Xxx^TX^T}{||x||^2})^{-1})
+
(||x||^2 - x^TX^T(XX^T)^{-1}Xx )^{-1}\\
&=
Tr((XX^T - \frac{Xxx^TX^T}{||x||^2})^{-1})
+
(x^T(x - Proj_X(x)) )^{-1}\\
&=
Tr((XX^T - \frac{Xxx^TX^T}{||x||^2})^{-1})
+
\frac{1}{||Proj_{X^\perp}(x)||^2}
\end{align*}

By Sherman-Morrison:
\begin{align*}
(XX^T - \frac{Xxx^TX^T}{||x||^2})^{-1}
&=
(XX^T)^{-1} + \frac{(XX^T)^{-1}Xxx^TX^T(XX^T)^{-1}}{||x||^2 - x^TX^T(XX^T)^{-1}Xx}\\
\implies
Tr (XX^T - \frac{Xxx^TX^T}{||x||^2})^{-1}
&=
Tr[(XX^T)^{-1}]
+ \frac{||(XX^T)^{-1}Xx||^2}{||Proj_{X^\perp}(x)||^2}
\end{align*}

Finally, we have

\begin{align*}
Tr((X_{n+1}X_{n+1}^T)^{-1})
&=
Tr[(XX^T)^{-1}]
+ \frac{1 + ||(XX^T)^{-1}Xx||^2}{||Proj_{X^\perp}(x)||^2}
\end{align*}
or equivalently:

$$
Tr((X_{n+1}X_{n+1}^T)^{-1})
=
Tr[(XX^T)^{-1}]
+ \frac{1 + ||(X^T)^\dagger x||^2}{||Proj_{X^\perp}(x)||^2}
$$

or equivalently:
$$
Tr((X_{n+1}X_{n+1}^T)^{-1})
=
Tr[(XX^T)^{-1}]
+ \frac{1 + ||\gamma||^2}{||Proj_{X^\perp}(x)||^2}
\quad \text{ where }
\gamma := \argmin_v || X^T v - x ||^2
$$
\end{proof}
\end{document}